\def\BibTeX{{\rm B\kern-.05em{\sc i\kern-.025em b}\kern-.08em
    T\kern-.1667em\lower.7ex\hbox{E}\kern-.125emX}}
\theoremstyle{plain}
\newtheorem{proposition}{Proposition}
\theoremstyle{definition}
\newtheorem{example}{Example}[section]
\theoremstyle{remark}
\newtheorem{remark}{Remark}[subsection]
\begin{document}
\title{
Learning Passive Continuous-Time Dynamics with Multistep Port-Hamiltonian Gaussian Processes
}

\author{Chi Ho Leung and Philip E. Par\'{e}*
    \thanks{*Chi Ho Leung and Philip E. Par\'e are with the Elmore Family School of Electrical and Computer Engineering, Purdue University, USA.
    E-mail: leung61@purdue.edu,philpare@purdue.edu. 
    This material is based upon work supported in part by the National Science Foundation (NSF-ECCS \#2238388).
    }
}

\maketitle

\begin{abstract}
We propose the multistep port-Hamiltonian Gaussian process (MS-PHS GP) to learn physically consistent continuous-time dynamics and a posterior over the Hamiltonian from noisy, irregularly-sampled trajectories. 
By placing a GP prior on the Hamiltonian surface $H$ and encoding variable-step multistep integrator constraints as finite linear functionals, MS-PHS GP enables closed-form conditioning of both the vector field and the Hamiltonian surface without latent states, while enforcing energy balance and passivity by design. 
We state a finite-sample vector-field bound that separates the estimation and variable-step discretization terms.
Lastly, we demonstrate improved vector-field recovery and well-calibrated Hamiltonian uncertainty on mass-spring, Van der Pol, and Duffing benchmarks.
\end{abstract}

\begin{IEEEkeywords}
Learning, Identification for control, Nonlinear systems identification, Machine Learning, 
Identification
\end{IEEEkeywords}

\section{Introduction}
Learning physically consistent continuous-time dynamics is a core requirement for modeling and control, but noise and irregular sampling complicate identification.
Gaussian processes (GPs) provide a principled Bayesian framework for uncertainty quantification in system identification \cite{rasmussen2003gaussian, alvarez2012kernels}, yet standard GP approaches typically ignore important structure such as energy conservation and passivity that are intrinsic to many mechanical and electrical systems.
Recent work has begun to combine GP learning with port-Hamiltonian system (PHS) constraints, injecting physics-informed inductive biases via tailored kernels \cite{beckers2022gaussian, beckers2023data}. 
These approaches directly enforce the port-Hamiltonian \cite{van2014port} structure in the learned dynamical model.
However, these methods do not fully address the practical obstacles of measurement noise and irregular sampling in trajectories.

To close this gap, we introduce the \emph{multistep port-Hamiltonian system} (MS-PHS) Gaussian process kernel. 
Our construction places a GP prior on the Hamiltonian surface $H(x)$ and leverages variable-step linear multistep methods (vLMM) to express discrete, irregularly-sampled trajectory constraints as finite linear functionals of $H$ and $\nabla H$.
Since GPs are closed under linear operations, this yields closed-form posterior means and covariances for both the continuous vector field $f(x)=[J(x)-R(x)]\nabla H(x)$ and the Hamiltonian surface $H(x)$, directly from noisy, irregular data 
without leveraging a separate data preprocessing stage for extracting vector field information from trajectory data. 
By construction, the approach preserves PHS properties such as energy balance and passivity while accurately calibrating uncertainty from noisy, irregular trajectory observations.

We summarize a preliminary estimation-error decomposition that separates numerical truncation via the vLMM order and step size from GP interpolation uncertainty.
The formal theorem and proof appear in a companion preprint~\cite{leung2025spectral}, and are not required for the methods and experiments herein.
Lastly, we empirically evaluate MS-PHS on three canonical oscillators (mass-spring, Van der Pol, Duffing) under noisy, irregular sampling, comparing against MS-ODE~\cite{ensinger2024exact} and GP-PHS~\cite{beckers2022gaussian} (with Savitzky-Golay/LOESS filters).
We report that across vector‐field recovery and Hamiltonian‐posterior calibration metrics, MS-PHS matches or exceeds the baselines while delivering uncertainty that tracks the true error, especially as noise and timestamp jitter increase.

\subsection{Related Works}
The port-Hamiltonian system (PHS) formalism provides a unifying language for interconnection, energy balance, and passivity, with modular composition and stability guarantees~\cite{van2014port, maschke1993port}.
Building on this foundation, recent identification approaches learn models that retain passivity by design~\cite{beckers2022gaussian, zaspel2024data, neary2023compositional}. 
These advances clarify what physical invariants to encode when learning passive continuous-time systems, and naturally motivate probabilistic treatments under noisy, irregular sampling.

Gaussian processes offer nonparametric priors with closed-form posteriors under linear functionals and have been used effectively for continuous-time vector-field recovery~\cite{heinonen2018learning}.
In many pipelines, derivative preprocessing and regular sampling are pragmatic choices that work well~\cite{beckers2022gaussian, jagtap2020control, brunton2016discovering}. 
At the same time, when samples are irregular and end-to-end uncertainty quantification is essential, it is appealing to bring discretization inside the statistical model rather than treating it as a separate preprocessing stage.

The vLMM schemes such as AB/AM/BDF~\cite{hairer1993solving} translate flows into linear constraints in stacked vector field evaluations. 
Recent work shows that these constraints can be projected into GP posteriors for exact inference, even with variable step size~\cite{ensinger2024exact}. 
This line of work substantially advances GP-based identification under irregular sampling. 
A key contribution of our work is to integrate physics-informed PHS priors~\cite{beckers2022gaussian} into the vLMM-based GP~\cite{ensinger2024exact} to propagate noise and sampling irregularity into the Hamiltonian posterior. Our proposed GP framework enables the quantification of geometric uncertainty with meaningful physical interpretations.

The paper is organized as follows: Section~\ref{sc:background} reviews background on PHS and GPs. Section~\ref{sc:method} develops the MS-PHS kernel and theoretical results. Section~\ref{sc:experiment} presents experiments.

\subsection{Notations}
The notation \(\mathbb{R}\) denotes the real number line. 
Vectors in \(\mathbb{R}^n\) are column vectors. 
The Kronecker product is denoted as $\otimes$.
The matrix $I_n$ denotes the $n\times n$ identity.
The operator ${\rm vec}$ denotes column-wise vectorization, mapping $X \in \mathbb{N}^{m\times n}$ into ${\rm vec}(X) \in \mathbb{R}^{mn}$.
$[A]_{ij}$ denotes the $i,j$ entry of a matrix $A$.
We write ${\rm det}(\cdot)$ for the determinant of a matrix.
Gradient of scalar function $H$ with respect to $x$ is denoted as $\nabla_x H$.
We write \(\mathbb E[\cdot]\) for expectation.
The GP distribution is denoted as $\mathcal{GP}(\cdot, \cdot)$.

\section{Background}\label{sc:background}
Necessary mathematical tools are introduced here.
\subsection{Port-Hamiltonian Systems}
Energy-conserving and dissipative systems with input/output ports can be formally described as port-Hamiltonian systems (PHS) \cite{van2014port}:
\begin{equation*}
\begin{split}
    \dot {{x}} &= [J({x}) - R({x})]\nabla_{{x}} H({x}) + G({x}){u}\\
    {y}_{\rm out}      &= G({x})^\top \nabla_{{x}} H({x})
\end{split}
\end{equation*}
in which state ${x}(t) \in \mathbb{R}^n$ and the I/O ports ${u}(t), {y}_{\rm out}(t) \in \mathbb{R}^m$ evolve accordingly with time $t \in \mathbb{R}_{\geq 0}$. 
The Hamiltonian $H: \mathbb{R}^n \to \mathbb{R}$ is a smooth function that represents the total energy stored in the system.
The interconnection matrix $J: \mathbb{R}^n \to \mathbb{R}^{n\times n}$ is skew-symmetric.
The dissipative matrix $R: \mathbb{R}^n \to \mathbb{R}^{n\times n}$ is a positive semi-definite such that $R({x})=R^\top({x}) \succeq 0$.
The port mapping matrix $G: \mathbb{R}^n \to \mathbb{R}^{n\times m}$ defines how the external input/output ports ${u}$, ${y}_{\rm out}$ are coupled to the energy storage dynamics.

\subsection{Gaussian Process Regression (GPR)}
A GP is a Bayesian prior over functions $f:\mathbb{R}^d \to \mathbb{R}$, where $f\sim \mathcal{GP}(m(\cdot), k(\cdot, \cdot))$, such that any finite collection of function values is jointly Gaussian \cite{rasmussen2003gaussian}.  
The GP distribution, $\mathcal{GP}(m(\cdot), k(\cdot, \cdot))$, is fully specified by the mean and kernel function:
\begin{equation}
  m({x}) = \mathbb{E}[f({x})],\quad  k({x},{x}') = \operatorname{Cov}[f({x}),f({x}')].
\end{equation}
Given noisy observations $\mathcal{D} = \{({x}_i, y_i)\}_{i=1}^N$ with
\begin{equation}
    y_i = f({x}_i) + \varepsilon_i,\quad \varepsilon_i\sim\mathcal{N}(0,\sigma_y^2),
\end{equation}
the posterior over function values at test inputs $X_* = [{x}_*^1,\dots,{x}_*^L]^\top$ is also Gaussian,
\begin{align}
  f_* \mid \mathcal{D}, X_* &\sim \mathcal{N}\bigl(\mu_*, \Sigma_*\bigr),
\end{align}
with posterior predictive mean and covariance:
\begin{equation}
\begin{split}
  \mu_* &= m(X_*) + K_{*X}\bigl(K_{XX} + \sigma_y^2 I\bigr)^{-1}({y} - m(X)),\\
  \Sigma_* &= K_{**} - K_{*X}\bigl(K_{XX} + \sigma_y^2 I\bigr)^{-1}K_{X*},
\end{split}
\end{equation}
where $X = [{x}_1,\dots,{x}_N]^\top$, ${y} = [y_1, \dots, y_N]^\top$, $[K_{XX}]_{ij} = k({x}_i,{x}_j)$, $[K_{*X}]_{ij} = k({x}_*^i,{x}_j)$, and $[K_{}]_{ij} = k({x}_*^i,{x}_*^j)$.

The noise variance $\sigma_y^2$ and kernel hyperparameters, e.g. length‑scales and output variance, are learned by minimizing the negative log marginal likelihood,
\begin{equation}\label{eq:nll}
\begin{split}
  -\log p({y}\mid X) &= \frac12 {y}^\top\bigl(K_{XX} + \sigma_y^2 I\bigr)^{-1}{y}\\
    &\qquad +\frac12\log\det\bigl(K_{XX} + \sigma_y^2 I\bigr)
    +\frac{N}{2}\log 2\pi.
\end{split}
\end{equation}
For vector‑valued functions, such as continuous‑time dynamics ${f}:\mathbb{R}^n\to\mathbb{R}^n$, one can place independent GP priors on each output dimension or adapt a matrix kernel as detailed in \cite{alvarez2012kernels}.

\subsubsection{Port–Hamiltonian System Kernel}
To enforce port-Hamiltonian dynamics on noisy vector field data $\{({x}_k, {f}({x}_k) + \varepsilon_k)\}_{k=1}^K$, where: 
\begin{equation*}
    {f}({x}_k) \coloneq [J({x}_k) - R({x}_k)]\nabla_{{x}_k} H({x}_k).
\end{equation*}
A zero‐mean GP prior is proposed in \cite{beckers2022gaussian} to place on the unknown Hamiltonian:
\begin{equation}\label{eq:Ham_prior}
    H({x})\sim\mathcal{GP}\bigl(0, k_{\rm base}({x},{x}')\bigr).
\end{equation}
Under the PHS dynamics \eqref{eq:PHS}, closure of GPs under affine transformations induces the matrix‐valued PHS kernel:
\begin{equation}\label{eq:phs-kernel}
  k_{\rm phs}({x},{x}')
    = \sigma_f^2 J_R({x}) \nabla_{{x}}\nabla_{{x}'} k_{\rm base}({x}, {x}') J_R({x}')^\top,
\end{equation}
where \(J_R({x})=J({x})-R({x})\) encodes the interconnection and dissipation matrices, and $\sigma_f^2 \in \mathbb{R}_{>0}$ encodes the signal noise.
The above setup yields a GP prior:
\begin{equation}\label{eq:phs_prior}
    \dot{{x}} \sim \mathcal{GP}(G({x}){u}, k_{\rm phs}({x},{x}'))
\end{equation}
for the PHS dynamics in \eqref{eq:PHS}.
\begin{remark}
    For concreteness, we use the squared exponential kernel for $k_{\rm base}$, while noting that other kernel choices are equally possible.
    Specifically,
    \begin{equation}
        k_{\rm base}({x}, {x}')
        = \exp\Bigl(-\frac12 \sum_{i=1}^n\frac{({x}_{i}-{x}_{i}')^2}{\ell_i^2}\Bigr),
    \end{equation}
    where $\ell = [\ell_1, \dots, \ell_n]^\top$ is the vector of length-scales, with $\ell_i \in \mathbb{R}_{\geq 0},\ \forall i = 1, \dots, n$.
    This corresponds to the automatic relevance determination (ARD) parameterization, which assigns an individual length-scale to each state dimension.
    
\end{remark}

\section{Method}\label{sc:method}
We formulate the problem and introduce our method here.
\subsection{Problem Formulation}
Let $\mathcal{D} = \{(\tilde{{x}}(t_k), {u}(t_k), t_k)\}_{k=1}^K$
denote the observations of the states and input signals of a PHS:
\begin{equation}\label{eq:PHS}
    \dot {{x}} = [J({x}) - R({x})]\nabla_{{x}} H({x}) + G({x}){u},
\end{equation}
sampled at time $t_1,\dots,t_K$, where:
\begin{equation}\label{eq:noisy_observations}
    \tilde{{x}}(t_k) = {x}(t_k) + \varepsilon_k, \quad \varepsilon_k \sim \mathcal{N}(0, \sigma_{{x}}^2I),
\end{equation}
with $\sigma_{{x}}^2\in \mathbb{R}_{>0}$ and ${x}(t)$ satisfying Equation~\eqref{eq:PHS} under initial condition ${x}(0) \in \mathbb{R}^n$.
We follow the setting in \cite{beckers2023data} concerning our prior knowledge of \eqref{eq:PHS}:
\begin{enumerate}
    \item the Hamiltonian $H \in \mathcal{C}^\infty$ is unknown;
    \item the parametric structures of $J(\cdot), R(\cdot), G(\cdot)$ are known;
    \item the parameters of $J(\cdot), R(\cdot), G(\cdot)$ are unknown.
\end{enumerate}
We further assume that the PHS dynamics can be approximated using a specified numerical integration scheme~\cite{atkinson2009numerical}.

Under the above setup, our goal is to learn a probabilistic, physics-consistent model of the zero-input drift
$f(x) = [J(x)-R(x)]\nabla_x H(x)$ and of the Hamiltonian surface $H(x)$ that:
(i) combines vLMM constraints and PHS structure in a single GP likelihood;
(ii) derives closed-form posterior mean and covariance for both the vector field and the Hamiltonian at arbitrary states, with $H$ identified up to an additive constant fixed by an anchor $H(x_0)=H_0$; and
(iii) admits a finite-sample, high-probability error bound for the vector-field estimate that separates statistical fit from numerical discretization, exposing the roles of data points $\ell=K-M$ and multistep accuracy.
Kernel and PHS hyperparameters are selected by minimizing~\eqref{eq:nll}.


\subsection{Multistep Port-Hamiltonian System Kernel}
To perform exact GP inference of a Port–Hamiltonian vector field with irregularly-sampled trajectory data points, we combine the physics‐informed PHS prior from \cite{beckers2022gaussian} with the variable step size multistep integrator~ \cite{ensinger2024exact}.

\subsubsection{Multivariate Variable Step Size Multistep Projection}
With noisy state observations \(\{\tilde{{x}}_k\}_{k=1}^K\) at time steps \(\{t_k\}_{k=1}^K\), we define an order-$(p)$ vLMM~\cite[Ch.~3.5]{hairer1993solving} with the coefficient matrices, $A, B \in\mathbb{R}^{(K-M)\times K}$, parameterized by the step grid $\{h_k\}_{k=1}^{K-1}$, where $h_k = t_{k+1} - t_k$.
Let the control-affine system be:
\begin{equation}\label{eq:vlmm_dyn_constraints}
    \dot x(t) = f(x(t)) + g(x(t))u(t),
\end{equation}
with $f: \mathbb{R}^{n}\to \mathbb{R}^{n}$, ${g}: \mathbb{R}^{n}\to \mathbb{R}^{n\times m}$.
In the scalar case where $n=1$, dynamic constraints in \eqref{eq:vlmm_dyn_constraints} can be enforced through:
\begin{equation}\label{eq:AB_integrator}
    AX = B(\mathbf{f}(X) + \mathbf{g}(X)U),
\end{equation}
with the noiseless states $X = [x_1^\top, \dots, x_K^\top]^\top$ and inputs $U = [{u}_1^\top, \dots, {u}_K^\top]^\top$. 
The functions,
    $\mathbf{f}(X) = [f(x_1)^\top, \dots, f(x_K)^\top]^\top$ and
    $\mathbf{g}(X)U = [(g(x_1){u}_1)^\top, \dots, (g(x_K){u}_K)^\top]^\top,$
stacks the values of $f(x_k)$ and $g(x_k)u_k$, for $k=1 \dots K$, accordingly.
Notice that the forward Euler method with step sizes \(\{h_k\}\) for zero-input systems is a special case of \eqref{eq:AB_integrator}.
\begin{example}[Explicit Forward Euler]
    The discretization:
\[
  \dot x(t_k) = f(x(t_k))
  \quad\Longrightarrow\quad
  x_{k+1} - x_k = h_kf(x_k),
\]
can be written in the form:
\[
  AX = B\mathbf{f}(X),
  \quad
  \mathbf{f}(X)=\bigl[f(x_1),\dots,f(x_K)\bigr]^\top.
\]
Here \(A,B\in\mathbb{R}^{(K-1)\times K}\) have entries:
\begin{equation*}
  A_{k,i} =
    \begin{cases}
      -1, & i = k,\\
      +1, & i = k+1,\\
      0,  & \text{otherwise},
    \end{cases}
  \quad
  B_{k,i} =
    \begin{cases}
      h_k, & i = k,\\
      0,   & \text{otherwise},
    \end{cases}
\end{equation*}
so that: $(A X)_k = x_{k+1} - x_k$ and $(B \mathbf{f}(X))_k = h_k f(x_k)$.
\end{example}
Multistep integrators of order greater than or equal to $1$ (e.g., Adams-Bashforth/Moulton, BDF etc.~\cite[Ch.~3]{hairer1993solving}) form updates from linear combinations of past states and vector-field evaluations. 
This class of numerical integrators yields the stacked linear constraint $A X = B \mathbf{f}(X)$ which preserves Gaussianity under a GP prior on $f$ and permits closed-form posterior updates~\cite{ensinger2024exact} and recursive GP extensions~\cite{huber2013recursive}.

In the multivariate case where $x \in \mathbb{R}^n$ for $n>1$, the dynamic constraint encoded in \eqref{eq:AB_integrator} can be conveniently generalized by augmenting the coefficient matrices:
\begin{align}
    A_{I} \coloneq A \otimes I_n,\ \  
    &B_{I} \coloneq B \otimes I_n,
\end{align}
where $A_{I}, B_{I}\in \mathbb{R}^{(K-M)n\times Kn}$.
The label vector $Y$ for GPR is constructed and related to the feature vector $X$ via:
\begin{equation}\label{eq:Y}
  Y \coloneq A_{I} \tilde X = B_{I} {(\mathbf{f}(X) + \mathbf{g}(X)U)} + \boldsymbol{\varepsilon}.
\end{equation}
\begin{remark}[Unmodeled GP Input Noise]\label{rm:unmodel_NIGP}
  In practice, the inputs $X$ appearing in $\mathbf{f}(X)$ and $\mathbf{g}(X)$ on the right-hand side of \eqref{eq:Y}
  are corrupted by measurement noise, so the GP is in fact heteroscedastic.  
  A common remedy is to perform a local Taylor expansion of $f(x+\epsilon)$ around each datum and absorb the induced input noise into an input‐dependent output‐noise term, e.g. noisy input Gaussian process (NIGP), \cite{mchutchon2011gaussian}.  
  While higher‐order expansions and full posterior corrections \cite{goldberg1997regression, kersting2007most} can further improve accuracy, they lie outside the scope of this work.  
  Accordingly, we assume noiseless inputs to the GP.  
\end{remark}
We now extend the exact GP inference framework proposed in \cite{ensinger2024exact} by injecting a physics-informed prior via the matrix-valued PHS kernel and construct the multistep port-Hamiltonian system (MS-PHS) kernel:
\begin{align}
  [K_Y]_{nm}
    &= \sum_{i=0}^{K}\sum_{j=0}^{K}
       [B_{I}]_{n,i} k_{\rm phs}({x}_i,{x}_j) [B_{I}]_{m,j},
    \label{eq:KY}\\
  [k_Y({x}_*)]_n
    &= \sum_{j=0}^{K} [B_{I}]_{n,j} k_{\rm phs}({x}_*,{{x}}_j),
    \label{eq:kY}
\end{align}
which is equivalent to the following matrix forms: 
\begin{equation}\label{eq:projected-kernels}
  K_Y = B_{I} K_{\rm phs} B_{I}^\top,
  \qquad
  k_Y({x}_*) = B_{I} k_{\rm phs}(X, {x}_*).
\end{equation}
The proposed MS-PHS kernel conditioning on \(Y\) via the standard GP posterior formulas then yields closed‐form posterior means and covariances over the continuous‐time PHS dynamics, exactly accounting for both the physics prior and the multistep integration constraints.

MS-PHS inherits passivity and closure under power-conserving interconnection from GP-PHS \cite[Prop.~1,2]{beckers2022gaussian}, a direct consequence of the closure of GPs under linear operators. 
Conditioning the GP prior on the multistep linear functionals yields a posterior GP with the same smoothness, so the gradients required for energy-rate and passivity calculations remain well-defined. 
Since the multistep projection acts componentwise and linearly on the port variables, it preserves energy additivity and the Port-Hamiltonian interconnection structure. 

\subsubsection{Exact Inference via Multistep PHS Kernel}
Given noisy observations $\mathcal{D}$, the posterior over the zero-input PHS vector field at a test input ${x}_*\in \mathbb{R}^n$ is:
\begin{align}
  {f}_* \mid \mathcal{D}, {x}_* &\sim \mathcal{N}\bigl(\mu_{{f}}, \Sigma_{{f}}\bigr),
\end{align}
with posterior predictive mean and covariance computed as
\begin{align}
  \mu_{{f}}
    &= k_Y({x}_*)^\top\bigl(K_Y + \sigma_{{x}}^2 A_{I} A_{I}^\top\bigr)^{-1}(Y - B_{I}G(X)U),
    \label{eq:msphs-mean}\\
  \Sigma_{{f}}
    &= k_{\rm phs}(x_*,x_*) - k_Y({x}_*)^\top\bigl(K_Y + \sigma_{{x}}^2 A_{I} A_{I}^\top\bigr)^{-1}k_Y({x}_*),
    \label{eq:msphs-cov}
\end{align}
where \(K_Y = K_Y^\top \succ 0 \in \mathbb{R}^{(K-M)n\times (K-M)n}\) is the training covariance from \eqref{eq:KY}, 
\(k_Y(x^*) \in \mathbb{R}^{(K-M)n}\) is the cross-covariance from \eqref{eq:kY},
$k_{\rm phs}(x^*,x^*)\in\mathbb{R}^{n\times n}$ is the raw PHS kernel evaluation at the test input,
and \(\sigma_{{x}}^2\) is the measurement‐noise variance defined in \eqref{eq:noisy_observations}.

To analyze the error composition of MS-PHS GP, we recall two building blocks of the spectral-flow learning framework~\cite{leung2025spectral}, vLMM theory and spectral regularization.
An order-\(p\) vLMM converts irregularly-sampled trajectory windows into linear constraints on stacked vector-field evaluations via a label map. 
Under the usual bounded step-ratio and order-\(p\) consistency assumptions~\cite{hairer1993solving}, the single-window local truncation error (LTE) for a vector field~\(f\), denoted as
\(\Delta_{\rm MS}(h_k;f)\), satisfies the pointwise bound:
\begin{equation}\label{eq:LTE}
    \|\Delta_{\rm MS}(h_k;f)\|_2 \le C_{\rm LTE}h_k^{p+1},
\end{equation}
with \(C_{\rm LTE}>0\) independent of the step \(h_k=t_{k+1}-t_k\). 

Spectral regularization~\cite{rosasco2005spectral} on the windowed flow Hilbert space $\mathcal W$ supplies the statistical building block. 
Let \(\mathsf T\) denote the population covariance operator on the windowed flow space and \({\mathsf g}_\lambda(\mathsf T)\) a spectral filter with Lipschitz exponent \(\mu\). 
Under a source condition of order \(r>0\) and with \(\ell\) labeled windows, standard perturbation and concentration arguments yield the finite-sample flow error rate, i.e., with probability at least \(1-\eta\), \cite[Thm.~1]{leung2025spectral}:
\begin{equation}\label{eq:flow_fs-hp}
    \|\widehat\Phi-\Phi_\rho\|_{\mathcal W}^{2} \leq C_{\rm flow}\log\frac{4}{\eta}\;\ell^{-\frac{2r}{2r+\beta}},\qquad
\beta:=\max\{1,2\mu\}.
\end{equation}
where $\ell>0$ is the number of labeled windows $\ell \coloneqq K-M$, $\widehat\Phi, \Phi_\rho\in \mathcal{W}$ are the estimated and target flow of \eqref{eq:PHS}.
Combining \eqref{eq:flow_fs-hp} with the vLMM LTE \eqref{eq:LTE} produces a two-term high-probability vector-field bound:
\begin{proposition}[Vector-field finite-sample high-probability bound]\label{pr:vec_field_pac}
    Under the assumption that $H$ lies in the base RKHS induced by $k_{\rm base}$,
    for $0 < \eta \leq 1$, the following inequality holds with probability at least $1-\eta$:
    \begin{equation}\label{eq:vf-pac-bound}
        \|\widehat f - f_\rho\|_{\mathcal{V}}^{2}
        \le
        \frac{C_{\rm fit}}{c_{\rm obs}^{2}(h)} \log\frac{4}{\eta} \ell^{-\frac{1}{5}}
         + 
        \frac{C_{\rm bias}}{c_{\rm obs}^{2}(h)}\ \mathbb E[h^{2p+2}],
    \end{equation}
    for some finite positive constant $c_{\rm obs}$, $C_{\rm fit}$, and $C_{\rm bias}$.
\end{proposition}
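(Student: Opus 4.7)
The plan is to decompose the vector-field error into a statistical estimation term and a numerical discretization term, then apply the two building blocks quoted before the proposition, namely the flow-level high-probability bound~\eqref{eq:flow_fs-hp} and the vLMM LTE bound~\eqref{eq:LTE}. Since the MS-PHS posterior mean $\widehat f$ is defined via an inverse problem where the forward map is the integration operator encoded by $B_I$, my strategy is to transfer error bounds from the flow space $\mathcal W$ back to the vector-field space $\mathcal V$ through an observability-type constant $c_{\rm obs}(h)$ that quantifies the smallest singular behavior of this forward map on windows of step-size $h$.

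First I would introduce an auxiliary target $\bar f$ defined as the exact inverse of the vLMM label map applied to the true flow $\Phi_\rho$, and write the triangle inequality
\begin{equation*}
    \|\widehat f - f_\rho\|_{\mathcal V}^{2}
    \;\le\; 2\,\|\widehat f - \bar f\|_{\mathcal V}^{2}
    + 2\,\|\bar f - f_\rho\|_{\mathcal V}^{2}.
\end{equation*}
The second term is purely a discretization residual: each window-wise defect is exactly the local truncation error $\Delta_{\rm MS}(h_k; f_\rho)$, and applying~\eqref{eq:LTE} window-by-window together with an observability lower bound of the form $\|B_I v\|^2 \ge c_{\rm obs}^2(h)\,\|v\|_{\mathcal V}^2$ on the relevant subspace gives the bias term $C_{\rm bias}\,c_{\rm obs}^{-2}(h)\,\mathbb E[h^{2p+2}]$. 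The first term is the statistical estimation error: under the RKHS assumption on $H$, the MS-PHS estimator coincides with the spectral-filter estimator on the windowed flow space, so~\eqref{eq:flow_fs-hp} applies and, after pulling the bound back through $B_I$ using the same observability constant, yields the fit term $C_{\rm fit}\,c_{\rm obs}^{-2}(h)\,\log(4/\eta)\,\ell^{-2r/(2r+\beta)}$.

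Next I would pin down the exponent by selecting the source-condition order $r$ and filter-Lipschitz exponent $\mu$ dictated by the RKHS-membership hypothesis on $H$ and the smoothing induced by the integration operator; the target exponent $1/5$ corresponds to a specific admissible pair $(r,\beta)$ with $\beta=\max\{1,2\mu\}$ satisfying $2r/(2r+\beta)=1/5$, which is the regime in which the squared-exponential base kernel and the first-order moment condition on the windowed measure fit the source condition of~\cite{leung2025spectral}. Combining the bias and fit contributions through a single application of the union bound on the event of probability at least $1-\eta$ from~\eqref{eq:flow_fs-hp} produces the claimed inequality.

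The main obstacle is establishing the observability lower bound $c_{\rm obs}(h)>0$ and showing that it is uniform over the sampling grid. This requires verifying that the vLMM label matrix $B_I$ restricted to the range of admissible vector fields has a controlled smallest singular value, which in turn relies on the order-$p$ consistency and zero-stability of the multistep scheme plus a bounded step-ratio assumption on $\{h_k\}$. Once $c_{\rm obs}(h)$ is bounded below by a positive quantity depending only on the scheme and the grid regularity, the transfer of both the statistical and the truncation bounds from $\mathcal W$ to $\mathcal V$ is routine, and the remaining work is bookkeeping the constants $C_{\rm fit}$, $C_{\rm bias}$ absorbing the PHS-kernel and filter-dependent factors.
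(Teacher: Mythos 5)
Your architectural sketch (split the error into a statistical term and a discretization term, bound the first with the flow-level rate~\eqref{eq:flow_fs-hp}, the second with the LTE bound~\eqref{eq:LTE}, and pull both back to $\mathcal V$ through an observability constant $c_{\rm obs}(h)$) is consistent with how the paper motivates the result in the surrounding text, and your identification of the lower bound on $B_I$ as the main obstacle matches the paper's own caveat about the $1/c_{\rm obs}^2(h)$ factor. However, the paper does not actually carry out that decomposition here: it delegates the entire two-term bound to Theorem~2 of the companion preprint~\cite{leung2025spectral}, and its proof of Proposition~\ref{pr:vec_field_pac} consists solely of the exponent computation. That computation is exactly the step your proposal leaves unresolved. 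You write that the exponent $1/5$ ``corresponds to a specific admissible pair $(r,\beta)$'' satisfying $2r/(2r+\beta)=1/5$, but you never derive that pair from the hypotheses; this is backwards, since infinitely many pairs satisfy that equation and the point of the proof is to show the stated assumptions force this one. The paper's argument is: GPR is Tikhonov regularization, whose spectral filter has Lipschitz exponent $\mu=2$, hence $\beta=\max\{1,2\mu\}=4$; the assumption that $H$ lies in the base RKHS gives the baseline source condition $r=\tfrac12$, i.e.\ $\Phi_\rho\in\mathrm{Range}(\mathsf T^{1/2})$; therefore $2r/(2r+\beta)=1/(1+4)=\tfrac15$. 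Without pinning down $\mu$ and $r$, your proposal establishes a bound with an unspecified exponent $2r/(2r+\beta)$ rather than the claimed $\ell^{-1/5}$, so the one piece of the proof that this paper actually supplies is missing from your attempt.

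Beyond that, two smaller remarks. First, your triangle-inequality decomposition with the auxiliary target $\bar f$ and the window-by-window LTE accounting is a reasonable way one might prove the companion theorem, but since neither you nor the paper verifies the observability lower bound $\|B_I v\|^2\ge c_{\rm obs}^2(h)\|v\|_{\mathcal V}^2$, that part of your proposal remains a plan rather than a proof; the paper sidesteps this entirely by citation. Second, the paper's proof also records that a stronger source condition $r=2$ would improve the exponent to $\tfrac12$, which clarifies that the $\tfrac15$ rate is tied specifically to the minimal RKHS-membership assumption; your proposal does not make this dependence explicit.
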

\begin{proof}
    Since Tikhonov regularization, e.g., GPR, has Lipschitz exponent \(\mu=2\), we take \(\beta=\max\{1,2\mu\}=4\).
    Under the assumption that $H$ lies in the base RKHS induced by $k_{\rm base}$, the baseline source condition \(r=\frac{1}{2}\), i.e., $\Phi_\rho\in\mathrm{Range}({\mathsf T}^{\frac{1}{2}})$ makes \cite[Thm.~2]{leung2025spectral} yield the rate exponent \(2r/(2r+\beta)=\frac{1}{5}\).
    If stronger smoothness justifies \(r=2\), the exponent improves to \(2r/(2r+\beta)=\frac{1}{2}\)~\cite[Sec.~3.2]{rosasco2005spectral}.
\end{proof}
The estimated and target vector field are denoted as $\widehat f, f_\rho \in \mathcal{V}$ respectively.
The $(\ell, h)$-independent constants $C_{\rm fit}, C_{\rm bias}$ are derived from $C_{\rm flow}$ and $C_{\rm LTE}$, respectively.
The inverse term $1/c_{\rm obs}^2(h)$ is a caveat that arises from inverting the discretization methods when inferring the underlying vector field, which scales up with respect to decreasing step sizes $h$.

Proposition~\ref{pr:vec_field_pac} shows that the estimation error splits into a statistical part that scales with $\ell$, and a discretization part that scales with $h$. 
Therefore, merely adding more data cannot remove discretization bias. Integrating the numerical integrator into the GP is necessary to obtain calibrated posteriors for vector fields and derived quantities like the Hamiltonian.

\subsection{Posterior Hamiltonian Surface}
We now show how to recover the full posterior over the Hamiltonian surface $H(x)$, given our noisy observations $\mathcal{D}$ and a noiseless anchor $H(0)=H_0$.
By placing a zero‐mean GP prior on $H$ with base kernel $k_{\rm base}(x,x')$ in \eqref{eq:Ham_prior} and combining it with the VM‐PHS kernel for the trajectory data, we obtain a joint Gaussian prior over the augmented vector:
\[
  {y}_{\rm aug} =
    \begin{bmatrix}
      H(0) \\
      {f}_{\rm vec}
    \end{bmatrix}
  \sim
  \mathcal{N} \bigl(0, K_{gg}\bigr),
\]
where ${f}_{\rm vec} \coloneq Y - B_{I}{\rm vec}(G(X)U)$ and the covariance matrix $K_{gg}$ is:
\begin{equation}\label{eq:Kgg}
  K_{gg} =
  \begin{bmatrix}
    K_{HH}(0,0)            & K_{Hf}(0, X) \\[6pt]
    K_{fH}(X, 0)           & K_{ff}(X,X)
  \end{bmatrix}.
\end{equation}
Each block is constructed as follows:
\begin{enumerate}
  \item Scalar block:
  \begin{equation*}
      K_{HH}(0,0)=k_{\rm base}(0,0)+\epsilon_H,
  \end{equation*}
  where $\epsilon_H$ is a small jitter ensuring positive definiteness.
  \item Multistep gradient block:
  \[
    K_{ff}(X,X)
      = K_Y + \sigma_{{x}}^2 A_{I} A_{I}^\top,
  \]
  which matches the training covariance in \eqref{eq:msphs-mean} and \eqref{eq:msphs-cov}.
  \item Cross‐covariance blocks:
    \begin{equation}\label{eq:K_Hf}
    K_{Hf}(0,X)
    =
    \begin{bmatrix}
        J_R(x_1)\nabla_{x_1}k_{\rm base}(0,x_1)\\
        \vdots\\
        J_R(x_K)\nabla_{x_K}k_{\rm base}(0,x_K)
    \end{bmatrix}B_{I}^\top,
    \end{equation}
  and $K_{fH}(X,0)=K_{Hf}(0,X)^\top$.
\end{enumerate}
Once the hyperparameters, e.g., ARD lengthscales~\cite{rasmussen2003gaussian, neal2012bayesian}, in $k_{\rm base}$ are learned through optimizing the negative log marginal likelihood~\eqref{eq:nll}, the surface posterior can be inferred by leveraging the linearity of differentiation operators \cite[Ch.~9.4]{rasmussen2003gaussian}.
Given the a priori anchor $H(0)=H_0$ and gradient information ${f}_{\rm vec}$ encoded as the data transformed by the multistep projection, 
the posterior at any test point ${x}_*$ is obtained by standard Gaussian conditioning:
\begin{equation}
    H_* \mid \mathcal{D}, H_0, {x}_* \sim \mathcal{N}(\mu_H, \sigma_H^2),
\end{equation}
where the posterior mean $\mu_H$ and variance $\sigma_H^2$ are:
\begin{align}
  \mu_H({x}_*)
    &= K_{*g}({x}_*)\bigl(K_{gg}\bigr)^{-1} {y}_{\rm aug},
    \label{eq:H-mean}\\
  \sigma_H^2({x}_*)
    &= k_{\rm base}({x}_*,{x}_*)
       - 
      K_{*g}({x}_*)\bigl(K_{gg}\bigr)^{-1}K_{*g}({x}_*)^\top,
    \label{eq:H-var}
\end{align}
where:
\begin{equation*}
    K_{*g}({x}_*)
    = \bigl[k_{\rm base}({x}_*,0),K_{Hf}({x}_*,X)\bigr]
    \quad\in\mathbb{R}^{1\times (1 + (K-M)n)}.
\end{equation*}
In particular, $K_{Hf}({x}_*,X)$ is constructed exactly as \eqref{eq:K_Hf} but replacing the anchor with the test input~${x}_*$.  
Equations \eqref{eq:H-mean}–\eqref{eq:H-var} then yield closed‐form expressions for the posterior mean and variance of the Hamiltonian surface.
\begin{remark}
    Since derivative information alone can recover the integrated surface only up to an arbitrary constant,
    anchoring $H(0)=H_0$ is essential to fix the additive constant of the GP. 
    Any other noiseless constraint on $H$ (e.g., $H({x}_i)=h_i$) can be incorporated analogously by augmenting $K_{gg}$ and ${y}_{\rm aug}$ with the corresponding rows and columns.
\end{remark}

\section{Experiments}\label{sc:experiment}
We evaluate key features of the MS-PHS GP framework, including vector field and Hamiltonian surface posterior recovery.
We learn the observation-noise variance and GP kernel hyperparameters by minimizing the negative log marginal likelihood~\eqref{eq:nll} with ARD length-scales~\cite{rasmussen2003gaussian, neal2012bayesian}, using Adam optimizer~\cite{kingma2014adam}.
All GP learning and quadratic programming modules are implemented using GPyTorch~\cite{gardner2018gpytorch} and qpth~\cite{amos2017optnet} with PyTorch~\cite{paszke2019pytorch}.

\subsection{Benchmarks}
Specifications of the three canonical oscillators as benchmarks for our posterior and recovery experiments: the Van der Pol oscillator, the Duffing oscillator, and a linear mass-spring-damper system are as follows:

\paragraph{Mass–Spring System}
As a linear baseline, we choose stiffness $k=1.0$, mass $m=1.0$, and damping $d=0.0$ in:
\[
  \ddot q = -\frac{k}{m} q -\frac{d}{m} \dot q,
\]
representing an undamped harmonic oscillator.

\paragraph{Van der Pol Oscillator} 
We set the nonlinearity parameter to $\mu = 1.0$ in the second‐order dynamics:
\[
  \ddot q = \mu (1 - q^2) \dot q - q,
\]
which exhibits relaxation oscillations and non‐conservative behavior.

\paragraph{Duffing Oscillator}
We use:
\[
  \ddot q = -\alpha q -\beta q^3 -\gamma \dot q,
\]
where $\alpha=1.0,\ \beta=5.0,\ \gamma=0.5$, yielding a double‐well potential with nonlinear stiffness and damping.

To generate training and test data, each system is integrated over $t\in[0,20]$ using a classical fourth‐order Runge–Kutta solver with fixed time step $\Delta t = 4\times10^{-3}$. 
The resulting trajectories are then corrupted by additive state noise drawn from $\mathcal{N}(0,\sigma_x^2 I)$ and irregularly subsampled at time points $t_k\in[0,20]$.
For the irregularly subsampling scheme, we start from an $N$-points uniform grid $\{\tau_k\}_{k=1}^N$ on $[0,20]$ with $N=100$ and add i.i.d. Gaussian “jitter”
$\epsilon_k \sim \mathcal N\left(0,\sigma_j^2\right)$ with $\sigma_{j}=0.05$.
The irregular timestamps are $t_k=\mathrm{clip}\left(\tau_k+\epsilon_k, t_0, t_1\right),$ after which they are sorted to enforce monotonicity.
This produces near-uniform sampling with small, zero-mean perturbations, allowing mild clustering/gaps without changing the average sampling rate.
Furthermore, we apply zero input ($u(t)\equiv0$) on the Van der Pol oscillator and a nominal sinusoidal $\cos(\omega t)$ on the Duffing oscillator and mass-spring system.
We compare three inference schemes:
\begin{enumerate}
  \item \textbf{MS-PHS + Adams-Bashforth (MS-PHS-ab):} our proposed multistep Port–Hamiltonian GP with Adams–Bashforth orders 1-3~\cite[Ch.~3]{hairer1993solving};
  \item \textbf{MS-ODE + Adams-Bashforth (MS-ODE-ab):} the same multistep GP exact inference without a PHS prior with Adams–Bashforth orders 1-3, similar to the method proposed in \cite{ensinger2024exact};
  \item \textbf{GP-PHS + LOESS/Savitzky-Golay (GP-PHS-loess/savgol):} a baseline that pre-processes the noisy data with a 
  locally estimated scatterplot smoothing (LOESS), a.k.a. locally weighted regression (LOWESS)~\cite{cleveland1979robust,cleveland1988locally} before fitting a PHS kernel \cite{beckers2022gaussian}.
  The Savitzky-Golay filter~\cite{schafer2011savitzky} can be seen as a regular grid variant of LOESS.
  We use locally quadratic fitting (loess-2) and Savitzky-Golay order 3 (savgol-3) in our simulations.
\end{enumerate}
The Adams-Bashforth scheme is one of the simplest multistep integrators that fits the form in \eqref{eq:AB_integrator}.
Alternatives include Adam-Moulton~\cite[Ch.~3]{hairer1993solving}, backward-difference formulas~\cite[Ch.~5]{wanner1996solving}.
An in-depth comparison of these schemes for ODE-based exact GP inference appears in ~\cite{ensinger2024exact}.

We use LOESS with GP-PHS as a strong baseline for two reasons.
First, the PHS kernel encodes a parametric Port-Hamiltonian structure with passivity/dissipation, injecting physics that provides a strong inductive bias for vector-field learning.
Second, LOESS is a nonparametric, non-causal local polynomial smoother that naturally handles irregular sampling through distance-based kernel weights.
Furthermore, LOESS is a distance-weighted local polynomial regression that works directly with irregular sampling grids.
Savitzky-Golay can be seen as an evenly-weighted local polynomial variant of LOESS that works best with regular sampling grids.

The drawback of this family of preprocess filters is their look-ahead: LOESS is not directly compatible with online/recursive GP inference and real-time control. 
Moreover, separating the filter from GPR inhibits proper propagation of observational uncertainty into the posterior covariance. 
Below, we show that MS-PHS offers the best of both worlds, a physics-informed prior that reduces directional misalignment and a GP kernel that embeds the numerical integrator, enabling calibrated Hamiltonian uncertainty for downstream controls.

\subsection{Vector Field Inference}
In this section, we assess how well each method recovers the drift $f(x)$ from noisy, irregularly-sampled trajectories. 
Fig.~\ref{fig:vf_mse_bars} shows the performance measured by the mean-squared error (MSE) of the predicted vector field on an evaluation mesh around the simulated true trajectory.
After $30$ runs per benchmark and method, we record the MSE and its 95\% confidence interval across varying observation noise, time-grid jitter, and GP ARD initializations.
In Fig.~\ref{fig:vf_mse_bars}, we notice that the proposed MS-PHS GP regression with AB-3 integrator performs at least comparable to, if not better than, GP-PHS and MS-ODE in the three benchmarks.
Furthermore, we observe that performance typically improves with increasing the order of the integrator.

\begin{figure}[t]
    \centering
    \includegraphics[width=\linewidth]{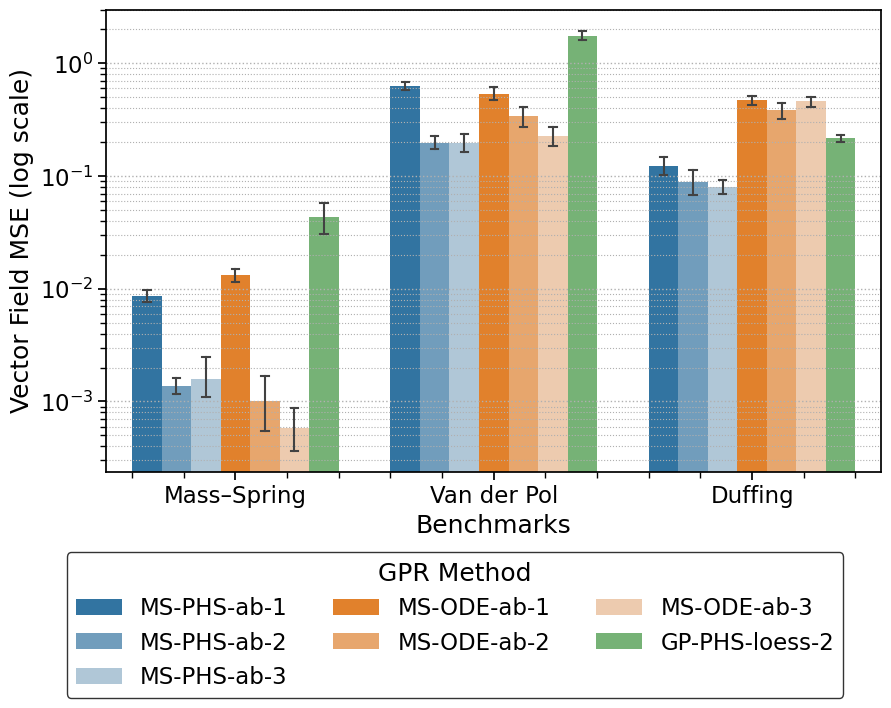}
    \caption{Vector-field mean squared error (MSE) on three dynamical benchmarks—Mass-Spring, Van der Pol, and Duffing oscillators. 
    Bars compare Gaussian-process regressors: multistep Port-Hamiltonian (MS-PHS-ab-1/2/3), multistep ODE (MS-ODE-ab-1/2/3), and a GP-PHS variant with Savitzky–Golay smoothing (GP-PHS-loess-2). 
    Error bars indicate variability across runs. Lower is better.}
    \label{fig:vf_mse_bars}
\end{figure}

To better understand the cause of error in Fig.~\ref{fig:vf_mse_bars}, we visualize the learned flows against the ground-truth field on Van der Pol, overlaying posterior uncertainty to localize directional misalignment and data-sparsity effects in Fig.~\ref{fig:vf_flow_comp}.
\begin{figure}[t]
    \centering
    \includegraphics[width=\linewidth]{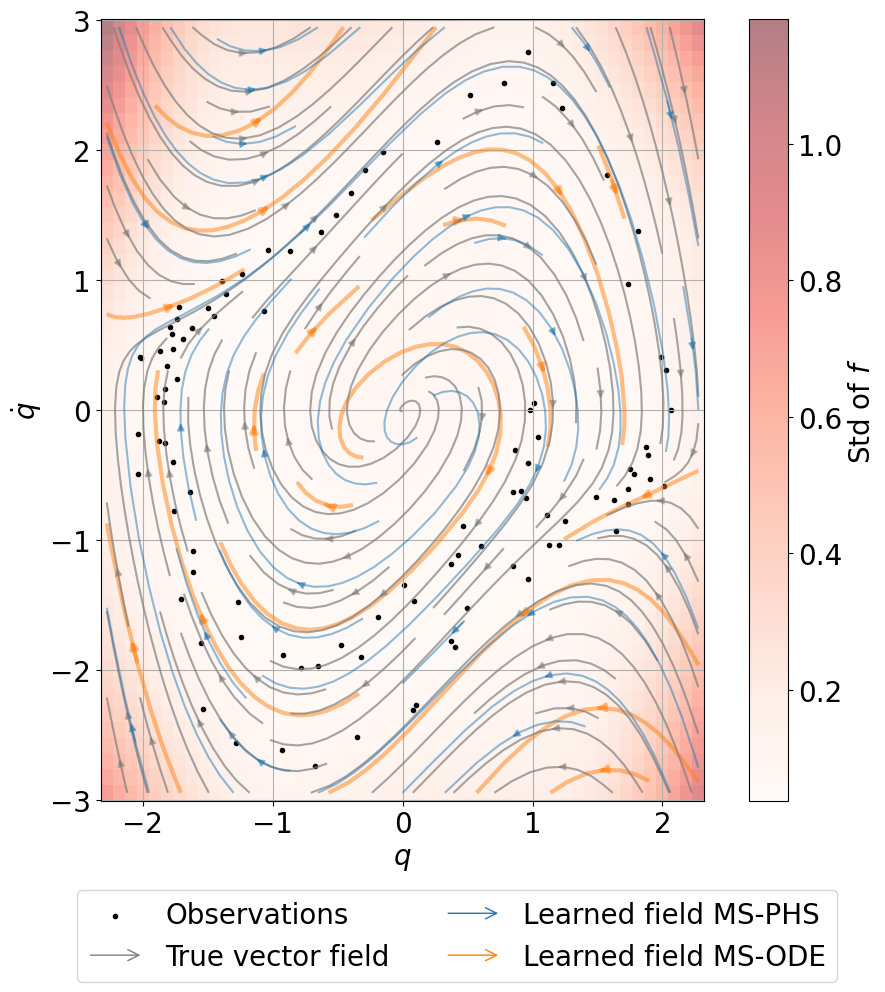}
    \caption{Van der Pol vector-field comparison. Ground truth (gray streamlines); MS-PHS GP (blue); MS-ODE GP (orange). 
    Black dots are observations; 
    red shading indicates high posterior standard deviation (std) of $f_*$, computed as $\sqrt{\sum_{i=1}^n(\Sigma_{{f}})_{ii}}$, with $\Sigma_{{f}}$ defined in \eqref{eq:msphs-cov}. 
    MS-PHS preserves the correct flow direction even in uncertain regions, whereas MS-ODE does not.
    }
    \label{fig:vf_flow_comp}
\end{figure}
The added PHS structure: skew-symmetric interconnection and dissipative terms tied to $\nabla H$ constrain $f$ to be approximately tangent to the energy level sets while inducing damping.
This physics prior regularizes extrapolation in data-sparse regions, reducing the spurious rotations seen with MS-ODE in Fig.~\ref{fig:vf_flow_comp}. 

To quantify directional agreement independent of scale, we next report the average cosine distance between true and learned vector fields with jittering grid effects turned off and replacing LOESS with Savitzky-Golay filter in GP-PHS.
In Table~\ref{tab:vf_cosine}, each entry is the mean cosine distance, with the standard deviation in parentheses, computed over the evaluation mesh across runs.
\begin{table}[t]
  \centering
  \caption{Learned Vector Field Average Cosine Distances. Entries are ``mean (standard deviation)'' across 30 runs.}
  \label{tab:vf_cosine}
  \begin{tabular}{l|c|c|c}
    \hline
    & \textbf{MS-PHS-ab-3} & \textbf{MS-ODE-ab-3} & \textbf{GP-PHS-savgol-3} \\ 
    \hline
    MassSpring & 0.001 (0.002) & 0.001 (0.005) & 0.003 (0.019) \\
    VanDerPol  & 0.003 (0.018) & 0.011 (0.034) & 0.002 (0.016) \\
    Duffing    & 0.003 (0.015) & 0.042 (0.093) & 0.003 (0.006) \\
    \hline
  \end{tabular}
\end{table}
Table~\ref{tab:vf_cosine} confirms the visual takeaway from Fig.~\ref{fig:vf_flow_comp}, that physics-informed methods yield smaller cosine distances and tighter variability across runs. 
A short summary of Table~\ref{tab:vf_cosine}:
\begin{enumerate}
    \item Mass–Spring: MS-PHS attains the best mean (tied with MS-ODE) but with a lower standard deviation, indicating more reliable alignment.
    \item Van der Pol: Both PHS methods match on mean and clearly outperform MS-ODE on both mean and standard deviation.
    \item Duffing: PHS methods are an order of magnitude better than MS-ODE, with low variance.
\end{enumerate}
Overall, the added PHS structure reduces directional misalignment and stabilizes extrapolation in sparse regions, as anticipated.

\subsection{Posterior Hamiltonian Recovery}
We next inspect the learned Hamiltonian geometry. Fig.~\ref{fig:duffing3D_ham_surf} overlays the MS-PHS posterior mean and its $\pm 2\sigma_H$ band on the ground-truth Duffing surface.
Uncertainty expands away from data (black points) and in high-curvature regions of the landscape, while remaining tight near visited states, consistent with the data support and the PHS prior.
\begin{figure}[t]
    \centering
    \includegraphics[width=\linewidth]{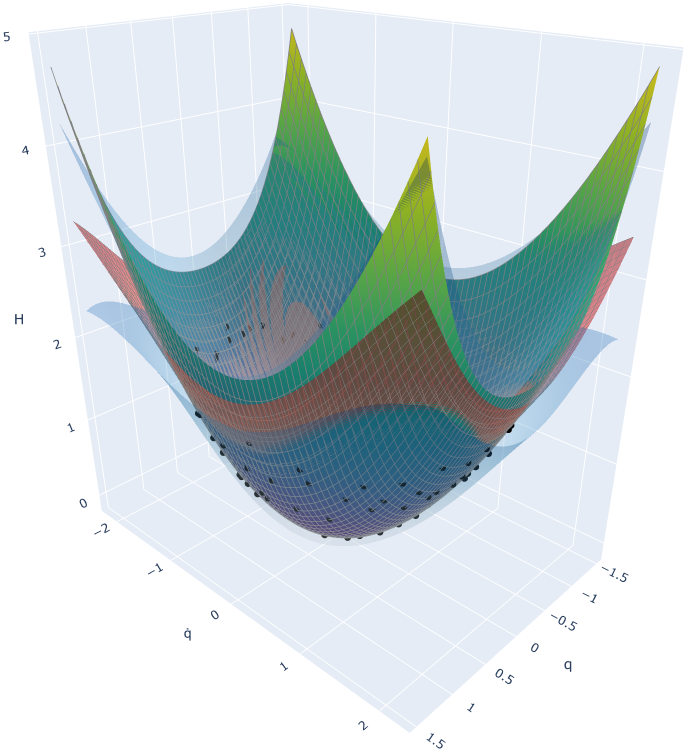}
    \caption{3D Hamiltonian landscape $H$ for the Duffing oscillator. The learned MS-PHS posterior mean $\mu_H$ surface (red) is overlaid with a reference/ground-truth surface $H_{\rm true}$ (green) and 95\% posterior band $\mu_H \pm 2\sigma_H$ (translucent blue) for comparison. 
    Black dots mark observed trajectory samples.}
    \label{fig:duffing3D_ham_surf}
\end{figure}

An important criterion for evaluating the Hamiltonian posterior is how well the learned Hamiltonian’s predictive variance tracks the true absolute error.
Fig.~\ref{fig:H_abserr_margin_comp} visualizes the true absolute error $|H_{\text{true}}-\mu_H|$ and the posterior standard deviation $\sigma_H$. 
Both quantities are small near the center of the explored region and grow toward the boundaries.
Importantly, the 95\% band $\mu_H\pm 2\sigma_H$ tracks the true error, indicating a well-calibrated posterior. 
We next compare the calibration of the Hamiltonian posterior variance for GP-PHS and MS-PHS.

\begin{figure}[t]
    \centering
    \includegraphics[width=\linewidth]{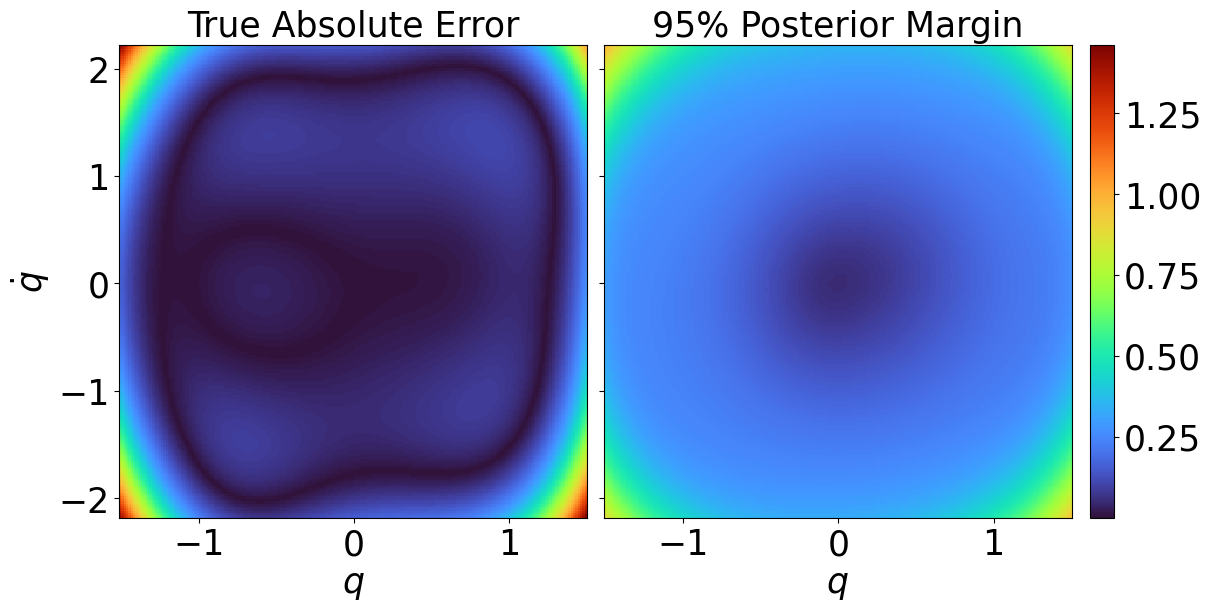}
    \caption{Duffing oscillator Hamiltonian error and uncertainty over the $(q,\dot q)$ plane. 
      (Left) true absolute error between ground truth and learned mean, $|H_{\text{true}}-\mu_H|$. 
      (Right) 95\% posterior margin, $1.96\sigma_H$. 
      The color bar shows magnitude; both error and uncertainty are small near the center and increase toward the domain boundary.
      Comparing the left panel and right panel shows that the $95\%$-posterior credible band ($\mu_H \pm 2\sigma_H$) tracks the true error: $|H_{\text{true}}-\mu_H|$.}
        \label{fig:H_abserr_margin_comp}
\end{figure}

\begin{figure}[t]
    \centering
    \includegraphics[width=\linewidth]{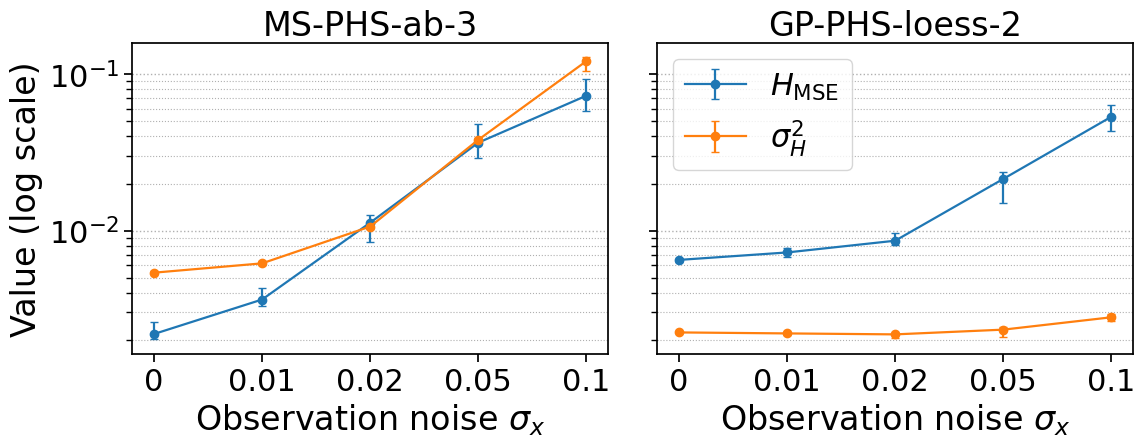}
    \caption{Scaling of Hamiltonian posterior error and uncertainty with observation noise on the Duffing system. 
    For each noise level $\sigma_x$, points show the median and bars the interquartile range (log scale). 
    Panels share the y-axis for direct comparison. 
    (Left) MS-PHS-ab-3: $\sigma_H^2$ rises in step with $H_{\text{mse}}$, indicating calibrated uncertainty across noise. 
    (Right) GP-PHS-loess-2: $H_{\text{mse}}$ increases sharply while $\sigma_H^2$ lags, especially for $\sigma_x^2\ge 0.02$, suggesting underestimation of uncertainty.
    }
    \label{fig:h_var_mse_comp_sibysi}
\end{figure}

\begin{figure}[t]
    \centering
    \includegraphics[width=\linewidth]{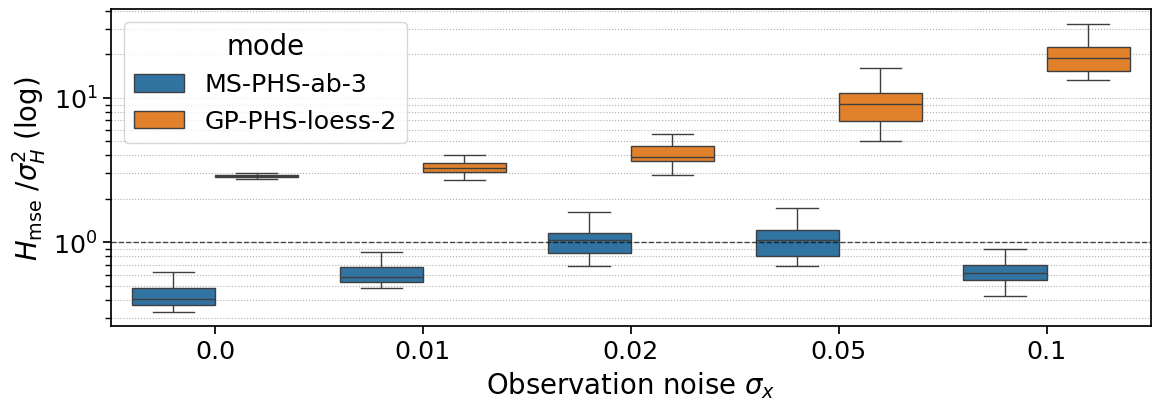}
    \caption{Error-uncertainty ratio of Hamiltonian uncertainty on Duffing. 
    Boxplots show the distribution of the $H_{\rm mse}/\sigma^2_H$ across observation noise levels $\sigma_x$, where dashed line represents ideal calibration at 1.
    MS-PHS-ab-3 remains near or below 1 across $\sigma_x$, indicating posterior variance that tracks the error, while GP-PHS-loess-2 rise above 1 as noise increases, evidence of overconfident posterior estimates.}
    \label{fig:h_var_mse_comp_ratio}
\end{figure}

In Fig.~\ref{fig:h_var_mse_comp_sibysi}, GP-PHS-savgol-3 shows error $H_{\mathrm{mse}}$ increasing rapidly with noise, whereas its reported uncertainty, summarized by $\sigma_H^2$, lags, especially for $\sigma_x^2 \ge 0.02$.
This mismatch is consistent with the decoupling between the LOESS smoothing stage and the GP kernel: the pre-processing can absorb and bias modeling errors that the GP posterior never sees, so the predictive variance fails to reflect the true model error.
By contrast, MS-PHS-ab-3 exhibits posterior variance that accurately tracks the realized error. 
In Fig.~\ref{fig:h_var_mse_comp_sibysi} (left), $\sigma_H^2$ rises in step with $H_{\rm{mse}}$ across noise levels, indicating that the posterior dispersion scales with the actual reconstruction difficulty.

To further investigate the issue of calibrating posterior variance to match the MSE of Hamiltonian, we leverage the notion of error-uncertainty ratio, defined as $H_{\rm mse}/\sigma^2_H$.
The posterior with $H_{\rm mse}/\sigma^2_H < 1$ is conservative, $H_{\rm mse}/\sigma^2_H> 1$ is overconfident, $H_{\rm mse}/\sigma^2_H = 1$ is well calibrated.
The calibration ratio in Fig.~\ref{fig:h_var_mse_comp_ratio} makes this explicit: the values concentrate near, or slightly below, the ideal line at $1$ for MS-PHS-ab-3; while GP-PHS-savgol-3 rise above $1$ as noise increases, i.e., uncertainty is underestimated.

\begin{table}[t]
\centering
\caption{Error-uncertainty ratio ($H_{\mathrm{mse}}/\sigma_H^2$) across time-grid jitter levels on Duffing. Entries are ``median [Q1, Q3]'' across 30 runs.}
\label{tab:duffing_jitter_ratio}
\begin{tabular}{lcc}
\toprule
$\sigma_j$ & MS-PHS-ab-3 & GP-PHS-loess-2 \\
\midrule
0.00 & 1.16 [0.847, 1.42] & 4.24 [3.28, 5.33] \\
0.01 & 1.07 [0.831, 1.29] & 10.1 [8.96, 11.4] \\
0.02 & 1.14 [0.929, 1.39] & 9.06 [7.54, 12.2] \\
0.05 & 1.04 [0.903, 1.36] & 9.52 [7.75, 10.5] \\
0.10 & 1.06 [0.927, 1.38] & 9.2 [7.22, 11] \\
\bottomrule
\end{tabular}
\end{table}
We observe similar phenomena under time-grid jitter. Table~\ref{tab:duffing_jitter_ratio} reports the ratio $\sigma_H^2 / H_{\mathrm{mse}}$ across jitter levels $\sigma_j^2$. 
MS-PHS-ab-3 stays close to unity, medians $\approx 1$ with relatively tight interquartile ranges, again signaling calibrated variance.
In contrast, GP-PHS-savgol-3 remains far below $1$ for all jitters, confirming that the separation between smoothing and inference leaves its predictive variance insensitive to the induced modeling error.

\section{Conclusion}
We introduced the MS-PHS GP kernel that enables learning of continuous-time dynamics from noisy, irregular trajectories while enforcing port-Hamiltonian structure. 
Placing a GP prior on $H$ and embedding variable-step multistep constraints as linear functionals yields closed-form posteriors for both the drift and the Hamiltonian. 
The construction preserves passivity and composability under power-conserving interconnections and decomposes error into statistical and discretization terms.

Empirically, MS-PHS recovers vector fields and Hamiltonian geometry more accurately and with better directional alignment than non-physics or prefiltered baselines.
Meanwhile, MS-PHS produces well-calibrated posterior uncertainty that tracks empirical error such as $H_{\rm mse}$ across noise and sampling jitter. 
These properties make MS-PHS especially attractive for downstream control tasks where physically consistent models and reliable uncertainty quantification are essential.

\bibliographystyle{IEEEtran}
\bibliography{refs}

\end{document}